\documentclass[letterpaper, 10 pt, conference]{ieeeconf}  

\IEEEoverridecommandlockouts                              
\usepackage{amssymb}
\usepackage{graphicx}
\usepackage{xcolor}

\usepackage{booktabs}
\usepackage{subcaption}

\newtheorem{definition}{Definition}[section]

\newtheorem{lemma}{Lemma}[section]
\usepackage{amsmath}
\usepackage{algorithm}       
\usepackage{algpseudocode}   

\title{\LARGE \bf
Learning to Adapt: Representation-Based
Reinforcement Learning for Multi-Task
Skill Transfer
}

\author{Aryan Naveen, Haitong Ma, Haldun Balim, Na Li
\thanks{This work was supported by NSF AI institute: 2112085, NSF ECCS-2401390, ONR N000142512173}
\thanks{A. Naveen is currently affiliated with the Massachusetts Institute of Technology. This work was primarily carried out during his undergraduate studies at the Harvard School of Engineering and Applied Sciences. {\tt\small aryannav@mit.edu} }%
\thanks{H. Ma, H. Balim, and N. Li are with Harvard School of Engineering and Applied Sciences, 
        {\tt\small haitongma@g.harvard.edu, hbalim@fas.harvard.edu, nali@seas.harvard.edu}}%
}

\begin{document}

\maketitle
\thispagestyle{empty}
\pagestyle{empty}

\begin{abstract}

   Reinforcement learning has achieved remarkable success in learning complex control policies, yet its applicability remains limited due to sample inefficiency and poor generalization across tasks. In this work, we propose RepMT-SAC, a framework for multi-task RL that enables efficient knowledge sharing and robust transfer to new tasks. RepMT-SAC uses spectral MDP decomposition to capture transferable dynamics, structuring the value function into a task-agnostic core with a minimal task-specific adjustment. This design allows for strong zero-shot performance on in-distribution tasks and rapid few-shot adaptation to out-of-distribution tasks. We evaluate RepMT-SAC on quadcopter trajectory-following tasks across in-distribution and out-of-distribution contexts, demonstrating that it outperforms baselines by up to 30\%. 
\end{abstract}

\section{INTRODUCTION}
Reinforcement learning (RL) has emerged as a powerful framework for learning control policies in complex, high-dimensional systems  \cite{tang2025deep, mnih2015human}. While traditional formulations focus on single-task settings with well-specified reward functions, many potential applications are inherently multi-task, requiring agents to solve a distribution of related Markov decision processes (MDPs). In such settings, tasks often exhibit substantial shared structure—arising from common environment dynamics, state representations, or underlying physical laws—suggesting that knowledge acquired from one task should, in principle, be transferable to others.

Despite this potential, standard RL approaches largely treat tasks independently \cite{DBLP:journals/corr/abs-1801-01290}, leading to policies that overfit to task-specific reward signals and fail to exploit shared structure across tasks. This results in poor generalization, limited zero-shot transfer, and high sample complexity when adapting to new tasks \cite{yu2020gradient, dulac2020empirical}. The core challenge lies in disentangling task-specific objectives from task-invariant structure, such as shared dynamics, to enable the learning of reusable representations.

In this work, we focus on a class of multi-task MDPs in which all tasks share the same underlying system dynamics but differ in their reward functions or reference objectives. This setting is common in control applications—for example, a quadrotor required to track different trajectories operates under identical dynamics across tasks. Such shared dynamics induce a latent structure that can be leveraged to enable efficient knowledge transfer and improved generalization. However, conventional RL methods often entangle this structure with task-specific learning signals, preventing effective reuse of learned representations. Our goal is to explicitly exploit this shared-dynamics structure to improve sample efficiency and generalization across tasks.

Recent work in multi-task RL has begun to explore leveraging shared structure through latent task inference and value decomposition. Methods such as \cite{offpolicymeta, zintgraf2019varibad} model tasks as latent variables, enabling fast adaptation but implicitly entangling dynamics and rewards within learned embeddings. In contrast, other approaches factorize value functions into shared dynamics-dependent features and task-specific rewards \cite{DBLP:journals/corr/BarretoMSS16}, but these features are conditioned on a specific policy and capture only aspects of the dynamics relevant to expected returns. While these extensions enhance scalability and stability, they primarily rely on implicit sharing or value-based decompositions. In contrast, our approach leverages recent advances in spectral decompositions of MDPs to explicitly parameterize task-invariant dynamics as a standalone object, enabling disentanglement without depending on latent embeddings or value functions.

Recent work in theoretical reinforcement learning has explored spectral decompositions of Markov Decision Processes (MDPs) \cite{jin2020provably} \cite{yang2020reinforcement} \cite{uehara2021representation}, offering a framework for learning environment-aware representations. The resulting spectral functions of the transition dynamics in MDPs can \textit{linearly} model the state-action value function. These representations can be leveraged to efficiently synthesize policies for the modeled task, and when the representations are unknown, several sample-efficient learning methods—such as maximum likelihood estimation \cite{uehara2021representation}, contrastive learning \cite{zhang2022making}, or variational approaches—have been proposed to discover them from interaction data \cite{ren2022latent} \cite{ren2022spectral}.

Nevertheless, while existing spectral decomposition methods aim to learn task-independent representations, in practice these representations often conflate task-specific and task-agnostic information \cite{uehara2021representation, ren2022latent}. This entanglement limits their ability to generalize across tasks, as the learned skills may encode unnecessary task-dependent details. In this work, we propose an alternative perspective: the Q-function for any task can be expressed as a linear combination of truly task-agnostic skills, each weighted by the task. By explicitly separating skill representations from task-specific objectives, our approach enables reusable, transferable policies and more efficient adaptation to new tasks.

Building on this insight, we propose \textbf{Rep}resentation \textbf{M}ulti-\textbf{T}ask learning framework (\textbf{RepMT}), a multi-task framework designed to explicitly disentangle task-invariant environment representations from task-specific objectives. In the upstream phase, RepMT learns representations for task-agnostic dynamics that capture reusable skills across the environment. During task-specific finetuning, the framework learns task-dependent weights that combine these skills into an effective policy for each task.

RepMT is tested on quadcopter trajectory-following tasks spanning both in domain and out of domain settings. It achieves strong zero-shot performance on in-distribution tasks and rapidly adapts to out-of-distribution tasks with minimal finetuning, outperforming SAC \cite{DBLP:journals/corr/abs-1801-01290} and CTRL-SAC \cite{zhang2022making}.

The remainder of this paper is organized as follows. Section~\ref{sec: preliminaries} introduces the necessary notation, the multi-task setting. Section~\ref{sec: mt mdp sd} extends existing spectral decomposition to the multi-task setting and subsequently Section \ref{sec:repmt} presents RepMT-SAC, covering the critic's structural parameterization, upstream learning of dynamics and task representations, and downstream finetuning for rapid adaptation. Section~\ref{sec: experiments} evaluates RepMT-SAC on quadcopter trajectory-following tasks, demonstrating substantial improvements in reward, task success rate, and adaptation speed over SAC and prior representation-based RL methods.

\section{Preliminaries}
\label{sec: preliminaries}
In this section, we first formalize the multi-task reinforcement learning problem and introduce the task-conditioned MDP framework. We then present our grounding application of quadrotor trajectory tracking, followed by a brief overview of spectral representations and their relevance to MDP decomposition.
\subsection{Problem Setup}
In this work, we extend the standard MDP formulation to capture a family of stochastic control problems parameterized by a task variable $\tau \in \mathcal{T}$. Formally, we model the multi-task setting as a collection of Markov Decision Processes (MDPs) indexed by $\tau$, where $\mathcal{T}$ denotes a measurable task space. Each task $\tau$ induces an MDP of the form:
\[
\mathcal{M}^{(\tau)} = (\mathcal{S}, \mathcal{A}, r_\tau, P, \rho, \gamma),
\]
where $\mathcal{S}$ and $\mathcal{A}$ denote the shared state and action spaces, respectively; $P$ is the shared transition kernel, $\rho$ is the initial state distribution, and $\gamma \in [0,1)$ is the discount factor. While the state and action spaces, as well as the dynamics $P$ and initial distribution $\rho$, are assumed to be task-invariant, the reward function $r_\tau: \mathcal{S} \times \mathcal{A} \to \mathbb{R}$ is task-dependent, which we denote explicitly as
\[
r_\tau(s,a) := r(s,a,\tau), \quad \forall s \in \mathcal{S}, a \in \mathcal{A}.
\]

\begin{definition}[Task Distribution]
We assume a probability distribution $\mu \in \Delta(\mathcal{T})$ over tasks. Sampling from $\mu$ induces a stochastic process over MDPs: first, a task $\tau \sim \mu$ is drawn, and subsequently, a policy $\pi$ is executed in the corresponding MDP $\mathcal{M}^{(\tau)}$.
\end{definition}

Given the task distribution $\mu$ and the family of MDPs $\{\mathcal{M}^{(\tau)}\}_{\tau \in \mathcal{T}}$, we consider task conditioned policies: $\pi: \mathcal{S} \times \mathcal{T} \to \Delta(\mathcal{A})$, which explicitly condition the policy on the task variable $\tau$.

In this paper, we focus on task-conditioned policies. For a given task-conditioned policy $\pi$, we define the task-specific action-value function as:
\[
V^{\pi}(s;\tau) := \mathbb{E} \Bigg[ \sum_{t=0}^{\infty} \gamma^t \, r(s_t, a_t, \tau) \;\Big|\; s_0 = s \Bigg], \\
\]

\[
Q^{\pi}(s,a;\tau) := \mathbb{E}_{\pi, P} \left[ \sum_{t=0}^{\infty} \gamma^t r(s_t, a_t, \tau) \,\middle|\, s_0 = s, a_0 = a \right].
\]

The goal of multi-task reinforcement learning (MTRL) is to optimize the expected return across tasks drawn from $\mu$. Formally, the MTRL objective is defined as
\[
J(\pi) := \mathbb{E}_{\tau \sim \mu} \Big[ \mathbb{E}_{s_0 \sim \rho} [V^{\pi}(s_0;\tau)] \Big].
\]

\begin{definition}[MTRL Optimal Policies]
We formally denote the MTRL optimal policy as the policy that maximizes the expected return across a task distribution $\mu$:
    \[
    \pi^* \in \arg\max_{\pi} \; J(\pi).
    \]
\end{definition}

By conditioning policies on the task variable, one can learn strategies that adapt to diverse objectives while leveraging shared structure across tasks. To put these optimal policies into practice, we must ensure that the multi-task formulation remains well-behaved while preserving the underlying Bellman properties. The existence and uniqueness of these task-conditioned value functions are grounded in the theoretical properties of the multi-task Bellman operators. 

Specifically, for each task $\tau \in \mathcal{T}$, we define the task-specific Bellman operator $\mathcal{T}_\tau^\pi$ acting on value functions as
\begin{equation}
(\mathcal{T}_\tau^\pi V)(s) := \mathbb{E}_{a \sim \pi(\cdot|s,\tau)} \left[ r(s,a,\tau) + \gamma \mathbb{E}_{s' \sim P(\cdot|s,a)} V(s') \right].
\end{equation}

Similarly, to bridge the gap toward the globally optimal task-conditioned policy $\pi^*$, we define the optimal Bellman operator:
\begin{equation}
(\mathcal{T}_\tau^* V)(s) := \sup_{a \in \mathcal{A}} \left[ r(s,a,\tau) + \gamma \mathbb{E}_{s' \sim P(\cdot|s,a)} V(s') \right].
\end{equation}

The utility of these operators in the multi-task setting stems from their stability across the task distribution $\mu$. Even as the reward function shifts with $\tau$, the underlying contraction property remains invariant: the standard MDP result that the Bellman operator is a $\gamma$-contraction in the sup-norm applies here as well, since the dependence on $\tau$ only affects the rewards and not the contraction induced by $\gamma \in (0,1)$.

This stability allows us to view the multi-task landscape not as a collection of disjoint problems, but as a single augmented MDP. By defining the state space as $\tilde{\mathcal{S}} = \mathcal{S} \times \mathcal{T}$, we can express the dynamics as
\begin{equation}
\tilde{P}((s',\tau) \mid (s,\tau), a) = P(s' \mid s,a)\,\delta(\tau),
\end{equation}
where the task remains fixed throughout the trajectory, effectively ``freezing'' the task context while the agent navigates the shared state space.

\subsection{Quadrotor Trajectory Following}

To ground the theoretical framework of multi-task reinforcement learning (MTRL) in a concrete physical system, we consider the control problem of quadrotor trajectory following in this paper. In this setting, the quadrotor must learn a generalized control policy capable of following diverse flight paths by treating each unique trajectory as a specific task $\tau$.

In the context of autonomous flight, the shared dynamics $P$ represent the task invariant physical properties of the quadrotor (e.g., mass, inertia, and aerodynamic coefficients). We define the components of the multi-task MDP as follows:

\paragraph{State Space $\mathcal{S}$} The task-invariant state $s \in \mathcal{S}$ captures the full kinematic and dynamic configuration of the vehicle:
\begin{equation}
s = [\mathbf{p}, \mathbf{v}, \mathbf{q}, \boldsymbol{\omega}]^\top \in \mathbb{R}^{13},
\end{equation}
where $\mathbf{p} \in \mathbb{R}^3$ is the position, $\mathbf{v} \in \mathbb{R}^3$ is the linear velocity, $\mathbf{q} \in \mathbb{H}$ is the unit quaternion representing orientation, and $\boldsymbol{\omega} \in \mathbb{R}^3$ is the angular velocity in the body frame.

\paragraph{Task Space $\mathcal{T}$} A task $\tau$ corresponds to a specific reference trajectory $\mathbf{p}_{\text{ref}}$. To facilitate efficient learning and ensure the task variable is informative, we encode $\tau$ using a hybrid representation:
\begin{enumerate}
    \item \textbf{Polynomial Coefficients:} The global structure of the trajectory is defined by coefficients of a polynomial trajectory (as explained later in Section \ref{sec: experiments}), providing a compact time-invariant global context.
    \item \textbf{Sliding Reference Window:} To simplify the learning task, we include a long-horizon temporal window of future waypoints $\{\mathbf{p}_{\text{ref},t+k}\}_{k=1}^N$. This provides the policy with immediate local geometry, such as upcoming curvature or heading changes. 
\end{enumerate}

\paragraph{Reward Function $r(s,a,\tau)$} The reward is task-dependent as it penalizes deviations from the specific reference $\tau$. This formally is captured by:
\begin{equation}
r(s, a, \tau) = -\Big( w_1 \|\mathbf{p} - \mathbf{p}_{\text{ref}}\|_2^2 + w_2 \|\mathbf{v} - \mathbf{v}_{\text{ref}}\|_2^2 + w_3 h(s) \Big),
\end{equation}
where the weights $w_i$ balance tracking precision against control effort, and $h(s)$ is a function designed to encourage stable flight behaviors and avoid aggressive maneuvers.

By utilizing the augmented state $\tilde{s} = (s,\tau)$, the quadrotor must navigate a higher-dimensional manifold where the ``frozen'' task variable $\tau$ acts as a constant parameter throughout the episode.

\section{Spectral Decomposition Multi-Task MDP Extension}
\label{sec: mt mdp sd}

We now extend the spectral decomposition of MDPs to the multi-task setting defined above. The goal is to represent task-conditioned value functions in a low-dimensional, linearly parameterized form while separating task-invariant dynamics from task-specific rewards.

\begin{definition}[Spectral Decomposition of MTRL MDPs]
Let $\mathcal{M}^{(\tau)}$ be a multi-task MDP as defined in Section~\ref{sec: preliminaries}. We say that $\mathcal{M}^{(\tau)}$ admits a \emph{spectral decomposition} if there exists a task-invariant feature map
\[
\phi: \mathcal{S} \times \mathcal{A} \to \mathbb{R}^d
\]
and task-dependent reward parameter mapping
\[
\theta: \mathcal{T} \to \mathbb{R}^d
\]
such that the reward function can be expressed as a linear combination of features:
\begin{equation}
r(s, a, \tau) = \langle \phi(s, a), \theta(\tau) \rangle, \quad \forall s \in \mathcal{S}, a \in \mathcal{A}, \tau \in \mathcal{T}.
\end{equation}
\end{definition}

\begin{lemma}[Spectral Decomposition of Task $Q$-Function]
For any task-conditioned policy $\pi$, the action-value function $Q^\pi(s, a; \tau)$ admits a linear parameterization:
\begin{equation}
Q^\pi(s, a; \tau) = \langle \phi(s, a), w^\pi(\tau) \rangle,
\label{eq:linearQ}
\end{equation}
where the weight vector $w^\pi(\tau)$ captures both the immediate task-specific reward and the discounted expected future returns under $\pi$.
\end{lemma}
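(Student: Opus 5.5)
The natural route is to combine the Bellman equation for the task-conditioned policy with a linear factorization of the shared dynamics. The definition above makes the reward linear in $\phi$. For $Q^\pi$ to be linear in $\phi$ for \emph{every} $\pi$, the transition kernel must also factor through $\phi$. I will therefore make explicit the standard spectral structure of the cited works \cite{jin2020provably, ren2022spectral}: the shared kernel factorizes as
\[
P(s'\mid s,a) = \langle \phi(s,a), \nu(s')\rangle
\]
for some task-invariant (possibly signed) measure-valued map $\nu:\mathcal{S}\to\mathbb{R}^d$. Because $P$ is shared across tasks, this $\nu$ does not depend on $\tau$. The proof rests on this factorization, and I would state it as part of what ``admits a spectral decomposition'' means.

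First I would fix $\tau$ and $\pi$, assume $r$ is bounded, and record that $V^\pi(\cdot;\tau)$ is the unique bounded fixed point of $\mathcal{T}^\pi_\tau$. This follows from the $\gamma$-contraction noted in Section~\ref{sec: preliminaries}, and it also gives $\|V^\pi(\cdot;\tau)\|_\infty \le \|r\|_\infty/(1-\gamma)$. Conditioning on the first transition then gives the one-step identity
\[
Q^\pi(s,a;\tau) = r(s,a,\tau) + \gamma\, \mathbb{E}_{s'\sim P(\cdot\mid s,a)}\big[V^\pi(s';\tau)\big].
\]

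Next I substitute both factorizations into this identity. The reward term becomes $\langle \phi(s,a),\theta(\tau)\rangle$. For the expectation, linearity of the integral lets me pull $\phi(s,a)$ out:
\[
\int \langle \phi(s,a),\nu(s')\rangle V^\pi(s';\tau)\,ds' = \Big\langle \phi(s,a), \int \nu(s') V^\pi(s';\tau)\,ds'\Big\rangle .
\]
Setting
\[
w^\pi(\tau) := \theta(\tau) + \gamma \int_{\mathcal{S}} \nu(s')\, V^\pi(s';\tau)\,ds'
\]
yields \eqref{eq:linearQ}. This vector depends on $(s,a)$ only through $\phi$. Its first summand is the immediate task-specific reward weight and its second is the discounted future return under $\pi$, which matches the interpretation in the statement.

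The main obstacle is making the interchange and the definition of $w^\pi(\tau)$ rigorous. This requires each coordinate of $\nu$ to be a finite signed measure, so that $\int \nu\, V^\pi$ is finite given the sup-norm bound on $V^\pi$, and it requires Fubini to justify swapping the inner product with the integral. I would handle this by assuming bounded rewards and $\int \|\nu(s')\|\,ds' < \infty$. Under those assumptions the integral is a bounded linear functional of the bounded function $V^\pi$, and the argument closes.
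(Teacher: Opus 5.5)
Your proof is correct under the factorization you add, and it takes a different and sounder route than the paper's.

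The paper's proof starts from the same one-step Bellman identity and the linear reward. It then \emph{assumes} $Q^\pi(s',a';\tau)=\langle\phi(s',a'),w^\pi(\tau)\rangle$ at the next step and reads off $w^\pi(\tau)=\theta(\tau)+\gamma\,\mathbb{E}_{s',a'}[\phi(s',a')^\top w^\pi(\tau)]$. This is circular. Also, the second summand is a scalar that depends on $(s,a)$ through $s'\sim P(\cdot\mid s,a)$. So it is not a vector in $\mathbb{R}^d$, and it does not give a weight that depends on $\tau$ alone.

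You are right that linearity of the reward alone is not enough. Take one action, two states, $d=1$, $\phi(s_1)=1$, $\phi(s_2)=0$, $\theta=1$, and deterministic transitions $s_2\to s_1\to s_1$. Then for any $\gamma\in(0,1)$, $Q^\pi(s_2)=\gamma/(1-\gamma)\neq 0=\phi(s_2)\,w$ for every $w$. Your factorization $P(s'\mid s,a)=\langle\phi(s,a),\nu(s')\rangle$ is exactly the missing hypothesis. It is also the structure the paper's upstream loss fits through $\langle\phi(s,a),\mu(s')\rangle$, even though the paper's Definition leaves it out.

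With that hypothesis, your direct argument needs no assumption about $Q^\pi$ at the next step. Any bounded $V^\pi(\cdot;\tau)$ is absorbed into the explicit weight $\theta(\tau)+\gamma\int\nu(s')V^\pi(s';\tau)\,ds'$. This also justifies the statement's ``immediate plus discounted future'' reading of $w^\pi(\tau)$.

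The paper's fixed-point style could be repaired only by adding the same closure property (that $\mathbb{E}_{s'\sim P(\cdot\mid s,a)}[g(s')]$ is linear in $\phi(s,a)$ for every bounded $g$) together with a contraction argument. Your version is simpler and is the one that actually establishes the lemma. Your integrability conditions, bounded $r$ and $\int\|\nu(s')\|\,ds'<\infty$, are the standard linear-MDP ones and are enough to justify swapping the integral and the inner product.
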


\begin{proof}
Starting from the task-conditioned Bellman equation:
\begin{align*}
Q^\pi(s, a; \tau) 
&= r(s, a, \tau) + \gamma \, \mathbb{E}_{s' \sim P(\cdot \mid s, a)} [V^\pi(s'; \tau)] \\
&= \langle \phi(s, a), \theta(\tau) \rangle + \gamma \, \mathbb{E}_{s' \sim P(\cdot \mid s, a)} \big[ \mathbb{E}_{a' \sim \pi(\cdot \mid s', \tau)} [Q^\pi(s', a'; \tau)] \big].
\end{align*}

Assuming $Q^\pi(s', a'; \tau) = \langle \phi(s', a'), w^\pi(\tau) \rangle$, we have
\begin{align*}
Q^\pi(s, a; \tau) 
&= \langle \phi(s, a), \theta(\tau) \rangle + \gamma \, \mathbb{E}_{s', a'} [\langle \phi(s', a'), w^\pi(\tau) \rangle] \\
&= \langle \phi(s, a), \underbrace{\theta(\tau) + \gamma \, \mathbb{E}_{s', a'}[\phi(s', a')^\top w^\pi(\tau)]}_{w^\pi(\tau)} \rangle.
\end{align*}
Hence, $Q^\pi(s, a; \tau)$ admits the linear form~\eqref{eq:linearQ}. 
\end{proof}

Note that this result extends classical spectral or linear $Q$-function representations~\cite{zhang2022making} in a critical way: whereas prior work assumes that both the feature map $\phi(s,a)$ and the weight vector $w$ are \emph{fixed constants} independent of the task, our decomposition treats $w^\pi(\tau)$ as an explicit \emph{function of the task descriptor} $\tau$. Concretely, in single-task or task-agnostic settings, one writes $Q^\pi(s,a) = \langle \phi(s,a), w^\pi \rangle$ where $w^\pi \in \mathbb{R}^d$ is a single vector summarizing the value structure of the unique reward signal. In the multi-task setting, however, each task $\tau$ induces a distinct reward parameter $\theta(\tau)$ and, consequently, a distinct weight vector $w^\pi(\tau)$. The feature map $\phi(s,a)$ remains \emph{shared and task-invariant}, encoding the geometry of the state-action space, while all task-specific information is absorbed into $w^\pi(\tau)$. 
\begin{figure*}[t]  
    \centering
    \includegraphics[width=0.9\textwidth]{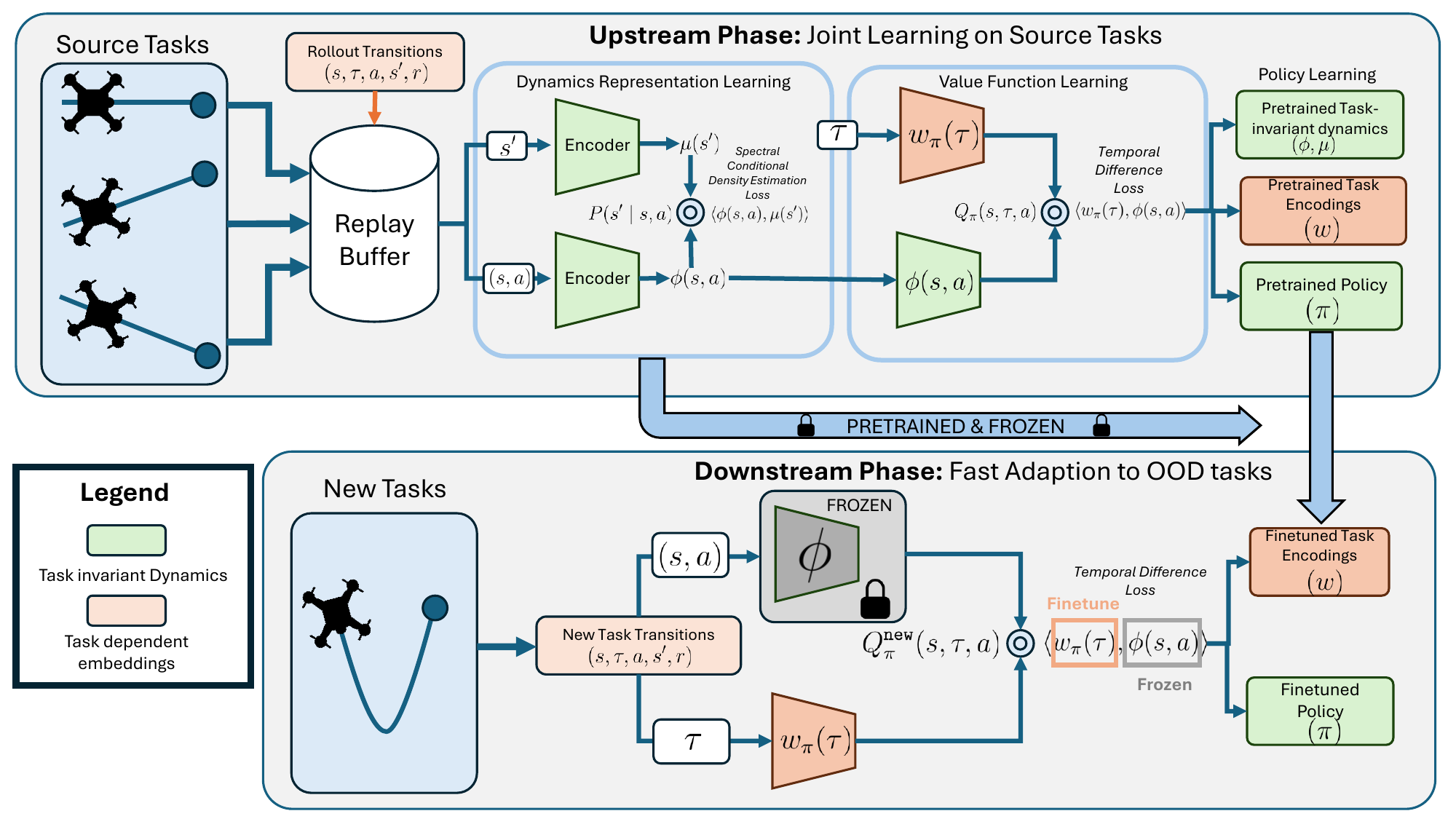}  
    \caption{Illustration of the two-phase learning paradigm. In the upstream phase (top), task-invariant dynamics representations and task-dependent embeddings are jointly learned from replayed transitions across source tasks. In the downstream phase (bottom), the learned dynamics are frozen while task-specific embeddings and the policy are fine-tuned for rapid adaptation to unseen tasks.}
    \label{fig: pipeline}
\end{figure*}
\textbf{Connection to the MTRL Objective:}
Given the linear representation of $Q^\pi(s, a; \tau)$, the task-specific value function is
\[
V^\pi(s; \tau) = \mathbb{E}_{a \sim \pi(\cdot \mid s, \tau)} [ Q^\pi(s, a; \tau) ] 
= \mathbb{E}_{a \sim \pi(\cdot \mid s, \tau)} [ \langle \phi(s, a), w^\pi(\tau) \rangle ].
\]

Consequently, the multi-task reinforcement learning (MTRL) objective can be expressed as
\begin{align}
J(\pi) &= \mathbb{E}_{\tau \sim \mu} \Big[ \mathbb{E}_{s_0 \sim \rho} \big[ V^\pi(s_0; \tau) \big] \Big] 
\\ &= \mathbb{E}_{\tau \sim \mu} \Big[ \mathbb{E}_{s_0 \sim \rho, a_0 \sim \pi(\cdot \mid s_0, \tau)} [ \langle \phi(s_0, a_0), w^\pi(\tau) \rangle ] \Big].
\label{eq:Jlinear}
\end{align}

Equation~\eqref{eq:Jlinear} explicitly shows that maximizing the expected return across tasks reduces to learning task-dependent weights $w^\pi(\tau)$ while leveraging the shared, task-invariant features $\phi(s, a)$.

\section{RepMT-SAC: Representation-based Multi-Task RL}
\label{sec:repmt}

In this section, we propose \textbf{RepMT-SAC}, a novel variant of the Soft Actor-Critic (SAC) algorithm designed for multi-task reinforcement learning. RepMT-SAC leverages the proposed MTRL parameterization to disentangle learning task-invariant environment dynamics from task-specific objectives, enabling efficient reuse of learned dynamics across tasks. As described in Figure \ref{fig: pipeline}, in the upstream phase, the framework jointly optimizes task-agnostic representations of the environment and task-dependent encodings of the Q-function, from which a maximum-entropy policy is derived. During downstream adaptation, the learned dynamics are frozen, and only the task encodings and policy are updated to rapidly specialize to new tasks. This two-phase approach allows RepMT-SAC to achieve strong zero-shot performance on in-distribution tasks and fast adaptation to out-of-distribution tasks, improving sample efficiency and generalization compared to standard multi-task RL baselines.

\subsection{Upstream Phase: Learning Task-Invariant Representations}
In the upstream phase, RepMT-SAC focuses on capturing reusable, task-invariant dynamics while simultaneously learning task-specific encodings that parameterize the Q-function. Concretely, we decompose the state-action value function for any task $\tau$ as specified in equation \ref{eq:Jlinear}, where $\phi(s, a)$ represents the task-agnostic dynamics, and $w_\pi(\tau)$ encodes task-dependent contributions. We follow the spectral conditional density estimation framework, which aims to learn task-invariant dynamics representations $(\phi, \mu)$ by approximating the conditional transition density $P(s' \mid s, a)$. Formally, the ideal objective is
{\small
\begin{align}
    \min_{\phi, \mu} \mathbb{E}_{(s,a) \sim \mathcal{D}, s' \sim P(\cdot \mid s,a)} \Big[ \big\| P(s^\prime \mid s,a) - \langle \phi(s,a), \mu(s') \rangle \big\|_2^2 \Big]
\end{align}
}
where $\mathcal{D}$ is the replay buffer of collected transitions. However, this is typically intractable, since $P(s' \mid s,a)$ is unknown.  

To address this, we adopt a sampling-based approximate loss based on the cross-entropy loss proposed in \cite{zhang2022making}:
{\footnotesize
\begin{align}
    - \mathbb{E}_{(s,a) \sim \mathcal{D}, s' \sim P(\cdot \mid s,a)} \Big[ 
        \log \frac{\exp\big(\langle \phi(s,a), \mu(s') \rangle\big)}{\sum_{s'' \in \mathcal{D}} \exp\big(\langle \phi(s,a), \mu(s'') \rangle\big)} 
    \Big]  
    \label{eq: feature loss}
\end{align}
}

Subsequently, the task encoding $w(\tau;\theta)$ is updated via a temporal-difference (TD) objective:
\begin{align}
    \min_{\theta} \; 
\mathbb{E}
\Big[ 
\big( r + \gamma \bar{Q}(s^\prime, \tau, \pi(s^\prime; \tau)) - w(\tau;\theta)^\top \phi(s, a) \big)^2 
\Big]
\label{eq: td loss}
\end{align}
where we sample $(s, \tau, a,r,s')$ from $\mathcal{D}$ and $\bar{Q}$ is the target Q-function commonly used in the target network trick \cite{mnih2015human}. The advantage of this parameterization is that, compared to traditional deep Q-learning \cite{mnih2013playing}, the policy evaluation optimization is a linear function, yielding much more stable learning. The task-invariant representation $\phi(s, a)$ is learned jointly with the TD loss, ensuring it captures the underlying dynamics shared across all tasks. 

Finally, a maximum-entropy policy $\pi(a \mid s_\circ, s_\tau)$ is derived from the learned Q-function following the Soft Actor-Critic (SAC) framework. Specifically, the policy is parameterized to maximize both expected return and entropy, encouraging exploration and robustness:
\begin{align}
    \pi(a \mid s_\circ, s_\tau) \propto \exp \Big( \frac{1}{\alpha} Q(s_\circ, s_\tau, a) \Big)
    \label{eq: policy}
\end{align}
where $\alpha$ is the temperature coefficient controlling the trade-off between reward maximization and entropy. 

In practice, we implement this using the reparameterization trick, which allows efficient gradient-based updates of the policy parameters. As summarized in algorithm \ref{alg:repmt upstream}, by combining the task-invariant representations $\phi(s, a)$ with task-specific encodings $w(\tau)$ in the Q-function, this approach produces a flexible, reusable policy that can be rapidly adapted during downstream task-specific finetuning.
\begin{algorithm}[H]
\caption{Upstream Phase of RepMT-SAC}
\begin{algorithmic}[1]
\State \textbf{Input:} MDPs for source tasks $\mathcal{M}^{\tau_1}, \dots, \mathcal{M}^{\tau_k}$
\State Initialize task-invariant representations $\phi, \mu$, task encodings $w_\theta$, critics $Q_1, Q_2$, policy $\pi$, target networks $\bar{Q}_1, \bar{Q}_2$
\For{episode $n = 1$ to $N$}
    \State Collect transitions $(s, \tau, a, r, s')$ and store in replay buffer $\mathcal{D}$
    \State Update task-invariant representations $(\phi, \mu)$ by minimizing \ref{eq: feature loss}
    \State Update task encoding $w_\theta$ via \ref{eq: td loss}
    \State Update policy $\pi$ according to \ref{eq: policy}
\EndFor
\State \textbf{Output:} Task-invariant representations $\phi, \mu$, task encodings $w_\theta$, policy $\pi$
\end{algorithmic}
\label{alg:repmt upstream}
\end{algorithm}
\subsection{Downstream Adaptation to Unseen Tasks}
Once the upstream phase has learned robust task-invariant dynamics $\phi(s, a)$ and the auxiliary representation $\mu(s')$, RepMT-SAC can efficiently adapt to new, unseen task distribution $\hat{\mu}$. In this downstream phase, we freeze the task-invariant representations and only optimize the task-specific encoding $w(\tau;\theta)$ and the policy $\pi(a \mid s, \tau)$. This significantly reduces the number of parameters to train and improves sample efficiency.

By decoupling task-invariant dynamics from task-specific contributions, this downstream adaptation procedure allows RepMT-SAC to achieve fast specialization with very few environment interactions, enabling strong zero-shot performance when combined with task embeddings inferred from limited data. Algorithm \ref{alg:repmt downstream} summarizes this process.

\begin{algorithm}[H]
\caption{Downstream Adaptation of RepMT-SAC}
\begin{algorithmic}[1]
\State \textbf{Input:} New task MDP $\mathcal{M}^{\tau_\text{new}}$, frozen representations $\phi, \mu$, and pretrained $w$ and $\pi$
\For{episode $n = 1$ to $N_\text{adapt}$}
\State Collect transitions $(s, \tau, a, r, s')$ in the new task
\State Update $w_{\tau_\text{new}}$ using TD loss with frozen $\phi$ and $\mu$ according to \ref{eq: td loss}
\State Update policy $\pi_{\tau_\text{new}}$ according to maximum-entropy objective
\EndFor
\State \textbf{Output:} Adapted task encoding $w_{\tau_\text{new}}$, policy $\pi_{\tau_\text{new}}$
\end{algorithmic}
\label{alg:repmt downstream}
\end{algorithm}

This two-stage decomposition—learning reusable dynamics upstream and fast task-specific adaptation downstream—forms the core of RepMT-SAC, enabling both multi-task generalization and rapid adaptation to previously unseen environments.
\section{Experiments}
\label{sec: experiments}

\begin{table*}[t]
\centering
\caption{Performance comparison of CTRL, SAC, and RepMT methods across Source, In-Distribution (ID), and Out-of-Distribution (OOD) tasks. Metrics include mean reward (R) and success rate (S), with OOD results shown both before and after fine-tuning. Bold values indicate the best performance for each metric.}
\label{tab:results_last_finetune_full}
\begin{tabular}{lcccccccc}
\toprule
Method 
& Source R 
& Source S 
& ID R 
& ID S 
& OOD R (Before) 
& OOD S (Before) 
& OOD R (After) 
& OOD S (After) \\
\midrule
CTRL 
& 377.9 $\pm$ 35.4 
& 0.0 $\pm$ 0.0 
& 432.1 $\pm$ 128.4 
& 0.0 $\pm$ 0.0 
& 300.8 $\pm$ 114.2 
& 0.0 $\pm$ 0.0 
& 481.1 $\pm$ 183.6 
& 0.0 $\pm$ 0.0 \\
SAC 
& 857.1 $\pm$ 277.9 
& 60.0 $\pm$ 54.8 
& 876.1 $\pm$ 294.4 
& 60.0 $\pm$ 54.8 
& 524.4 $\pm$ 180.4 
& 26.7 $\pm$ 27.9 
& 775.6 $\pm$ 155.6
& 40.0 $\pm$ 54.8 \\
RepMT 
& \textbf{1094.2 $\pm$ 17.9} 
& \textbf{100.0 $\pm$ 0.0} 
& \textbf{1100.7 $\pm$ 16.1} 
& \textbf{100.0 $\pm$ 0.0} 
& 577.8 $\pm$ 388.1 
& 53.3 $\pm$ 50.6 
& \textbf{1034.8 $\pm$ 23.3} 
& \textbf{100.0 $\pm$ 0.0} \\
\bottomrule
\end{tabular}
\end{table*}

We evaluate the proposed  multi-task reinforcement learning framework for quadrotor trajectory following using the formalism established in Section~\ref{sec: preliminaries} in IsaacSim. Each trajectory corresponds to a task $\tau \in \mathcal{T}$, and the quadrotor must learn a policy $\pi(s, \tau)$ that generalizes across both in-distribution and out-of-distribution trajectories.

\begin{figure}[ht]
    \centering
    \includegraphics[width=\linewidth]{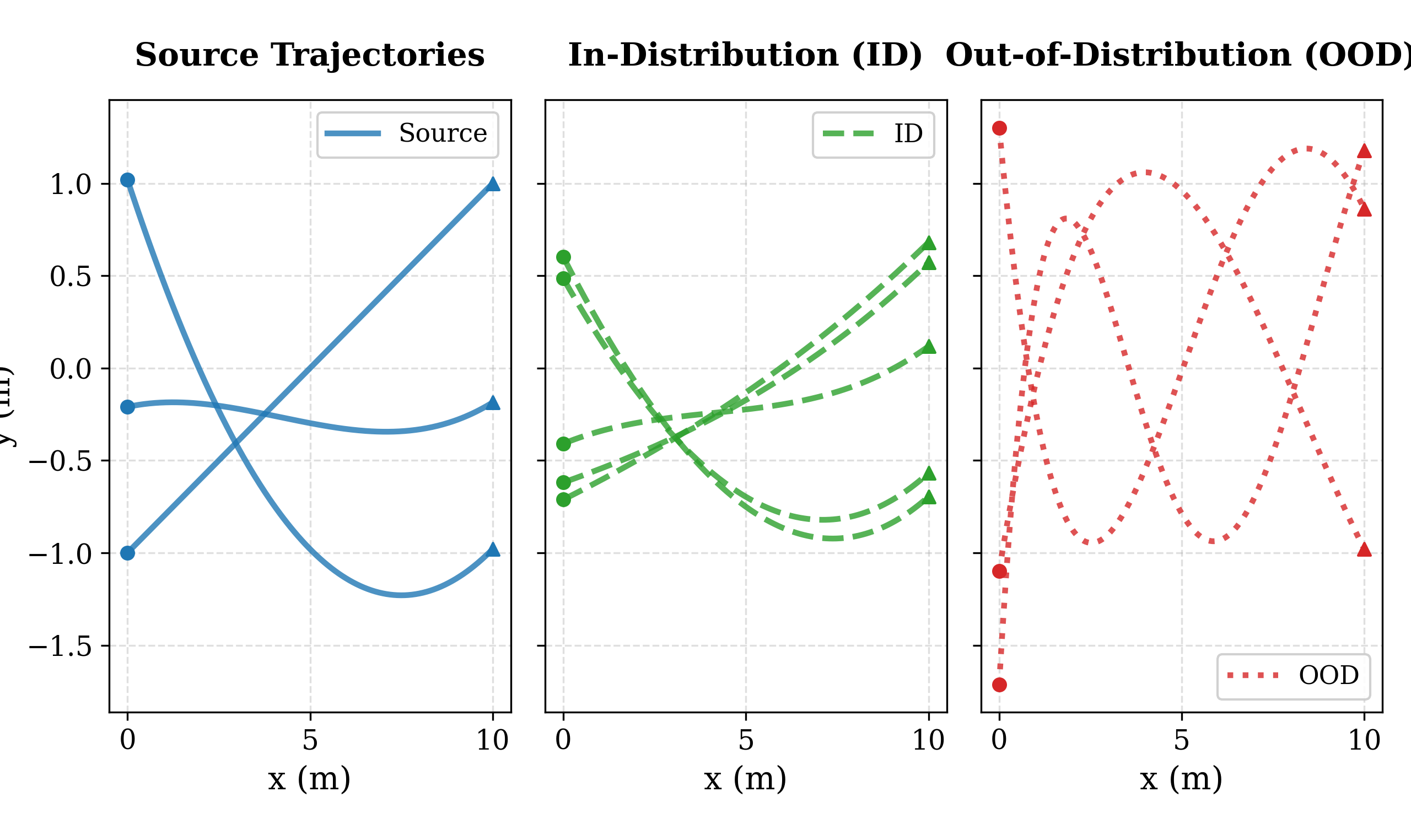}
    \caption{Visualization of the task decomposition used for evaluation. Source tasks form a basis set used during training, in-distribution tasks are convex combinations within this span, and out-of-distribution tasks extend beyond it via higher-order legendre polynomials.}
    \label{fig:trajectory}
\end{figure}

\subsection{Trajectory Representation and Task Parameterization}

Following Section~\ref{sec: preliminaries}, we encode each task $\tau$ as a parameterized polynomial trajectory. Formally, each trajectory is represented by a vector of coefficients:
\[
\tau = [c_0, c_1, \dots, c_d],
\]
with the continuous trajectory defined by
\[
f(t) = \sum_{i=0}^{d} c_i \psi_i(t), \quad \psi_i(t) \text{ Legendre polynomials \cite{ross2004legendre}.}
\]

This representation aligns directly with the task-dependent reward $r(s,a,\tau)$ and the augmented state $\tilde{s} = (s, \tau)$ described in Section~\ref{sec: preliminaries}. The coefficients $\{c_i\}$ serve as a compact embedding of the task variable for the policy $\pi(s, \tau)$, facilitating both learning and generalization.

\subsection{Task Hierarchy and Evaluation Distributions}

To assess generalization, we define three task sets consistent with our earlier notation and visualized in Figure \ref{fig:trajectory}:

\begin{enumerate}
    \item \textbf{Source Tasks} $\widehat{\mathcal{T}} \subset \mathcal{T}$: A set of fundamental polynomial basis functions (e.g., $\psi_0, \psi_1, \psi_2$) used for upstream training. The policy is trained on these tasks, corresponding to a uniform sampling distribution $\mu_{\text{train}}$ over $\widehat{\mathcal{T}}$.
    
    \item \textbf{In-Distribution Tasks} $\mathcal{T}^{\text{ID}} \subset \mathcal{T}$: Tasks formed as convex combinations of the source tasks,
    \[
    \tau^{\text{ID}} = \sum_{i \in \widehat{\mathcal{T}}} \alpha_i \psi_i, \quad \alpha_i \in [-1,1], \quad \sum_i \alpha_i = 1.
    \]
    These tasks remain in the span of the source basis, corresponding to the distribution $\mu_{\text{ID}}$, and allow us to measure generalization within the learned task manifold.
    
    \item \textbf{Out-of-Distribution Tasks} $\mathcal{T}^{\text{OOD}} \subset \mathcal{T}$: Higher-degree polynomials or tasks with coefficients outside the training range (e.g., $\psi_3, \psi_4, \psi_5$), defining $\mu_{\text{OOD}}$. Performance on these tasks evaluates the policy’s ability to transfer beyond the span of source tasks.
\end{enumerate}

In this framework, the polynomial coefficients naturally define the \textit{task-dependent component} of the augmented state, $\tau$, while the quadrotor’s odometry, orientation, velocity, and angular rates correspond to the \textit{task-invariant component}, $s$, as introduced in Section~\ref{sec: preliminaries}.

\subsection{Evaluation Protocol}

For each framework we run 5 trials across different seeds in order to evaluate the learned policies on:

\begin{itemize}
    \item \textbf{Source Task Performance:} Average return over $\widehat{\mathcal{T}}$ sampled from $\mu_{\text{train}}$, measuring learning performance.
    
    \item \textbf{In-Distribution Performance:} Average return over $\mathcal{T}^{\text{ID}}$ sampled from $\mu_{\text{ID}}$, measuring generalization within the span of training tasks.
    
    \item \textbf{Out-of-Distribution Performance:} Average return over $\mathcal{T}^{\text{OOD}}$ sampled from $\mu_{\text{OOD}}$, assessing transfer and adaptability to unseen trajectories.
    
    \item \textbf{Fine-Tuning:} For OOD tasks, we further apply downstream fine-tuning of the policy, optimizing
    \[
    J_{\text{OOD}}(\pi) = \mathbb{E}_{\tau \sim \mu_{\text{OOD}}} \left[ \mathbb{E}_{s_0 \sim \rho} V^\pi(s_0;\tau) \right],
    \]
    highlighting the utility of the learned task embedding $w(\tau)$ for rapid adaptation.
\end{itemize}

This experimental setup ties directly to the multi-task MDP formalism: the augmented state $\tilde{s} = (s, \tau)$, task-conditioned policy $\pi(s,\tau)$, and reward $r(s,a,\tau)$ follow exactly from the definitions and lemmas established in Section~\ref{sec: preliminaries}. The hierarchy of source, ID, and OOD tasks allows a clear quantification of generalization and transfer across the task distribution $\mu$.
\subsection{Baselines and Ablations}

We compare our approach against strong off-policy reinforcement learning baselines and ablations designed to isolate the contributions of our method. In particular, we use \textbf{Soft Actor-Critic (SAC)} \cite{DBLP:journals/corr/abs-1801-01290} as our primary baseline, as it represents the state-of-the-art for continuous control and off-policy controller learning.

To better understand the impact of our proposed representation structure, we also consider an ablated variant of our method, referred to as \textbf{CTRL-SAC} \cite{zhang2022making}. This variant incorporates a standard representation learning objective within the SAC framework but does not explicitly enforce the task-structured decomposition introduced in our approach.

Together, these baselines allow us to evaluate both overall performance gains and the specific benefits of our representation-based multi-task formulation.
\subsection{Performance on Source Tasks}
\begin{figure}[ht]
    \centering
    \includegraphics[width=\linewidth]{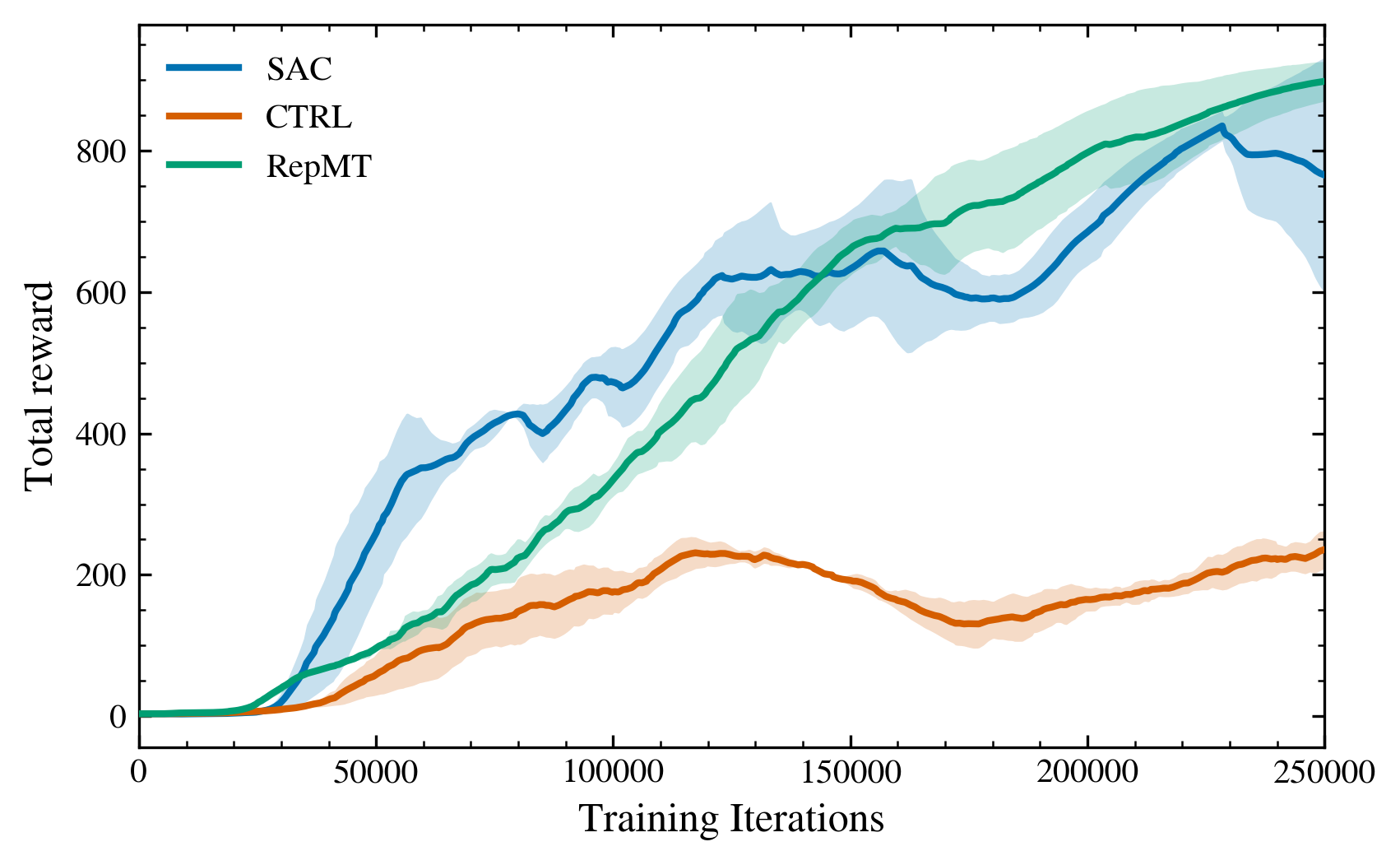}
    \caption{Comparison of learning curves across methods during upstream training. RepMT demonstrates stable, monotonic improvement with low variance, while baseline methods exhibit higher variance and less consistent convergence.}
    \label{fig:pretrain-rewards}
\end{figure}

The performance on source tasks $\widehat{\mathcal{T}}$ highlights the superior stability and asymptotic performance of the \texttt{RepMT} framework. As illustrated in Fig. \ref{fig:pretrain-rewards}, \texttt{RepMT} exhibits a near-monotonic increase in total reward with significantly lower variance compared to the \texttt{SAC} baseline, which suffers from pronounced oscillations and performance degradation after approximately $160,000$ iterations. Quantitatively, Table~\ref{tab:results_last_finetune_full} shows that \texttt{RepMT} achieves a perfect success rate of $100.0 \pm 0.0\%$ and a total reward of $1094.2 \pm 17.9$, significantly outperforming both \texttt{SAC} ($857.1 \pm 277.9$) and the \texttt{CTRL} ablation ($377.9 \pm 35.4$). This performance gain in fewer pretraining iterations is largely attributable to our task-structured value function parameterization $V(s; \tau)$, which utilizes the Legendre polynomial coefficients to provide a more consistent and informative gradient signal than standard off-policy methods.

\subsection{Performance on In-Domain Tasks}
To evaluate the policy's ability to generalize within the learned task manifold, we tested the frameworks on in-distribution (ID) tasks $\mathcal{T}^{\text{ID}}$ consisting of convex combinations of the source basis functions. As shown in Table~\ref{tab:results_last_finetune_full}, \texttt{RepMT} demonstrates exceptional zero-shot generalization, maintaining a $100.0 \pm 0.0\%$ success rate and a mean reward of $1100.7 \pm 16.1$. 

This performance represents a significant improvement over the baselines. While \texttt{SAC} achieves a mean reward of $876.1$, it suffers from high variance ($\pm 294.4$) and a success rate of only $60.0 \pm 54.8\%$, suggesting that standard off-policy reinforcement learning fails to consistently cover the interpolated task space. More notably, the \texttt{CTRL} ablation—which incorporates representation learning without our task-structured decomposition—fails to solve the ID tasks entirely, plateauing at a $0.0\%$ success rate. 

The contrast between \texttt{RepMT} and \texttt{CTRL} underscores that the explicit partitioning of the augmented state $\tilde{s} = (s, \tau)$ is essential for effective multi-task learning. By encoding tasks via a compact coefficient vector, the \texttt{RepMT} policy does not merely overfit to individual source trajectories but instead learns a continuous control manifold. This allows the quadrotor to interpolate seamlessly between fundamental maneuvers, confirming that our structured embedding captures the shared dynamics necessary for robust execution across the entire task distribution $\mu_{\text{ID}}$.

\begin{figure}[t]
\centering

\begin{subfigure}{\columnwidth}
    \centering
    \includegraphics[width=\linewidth]{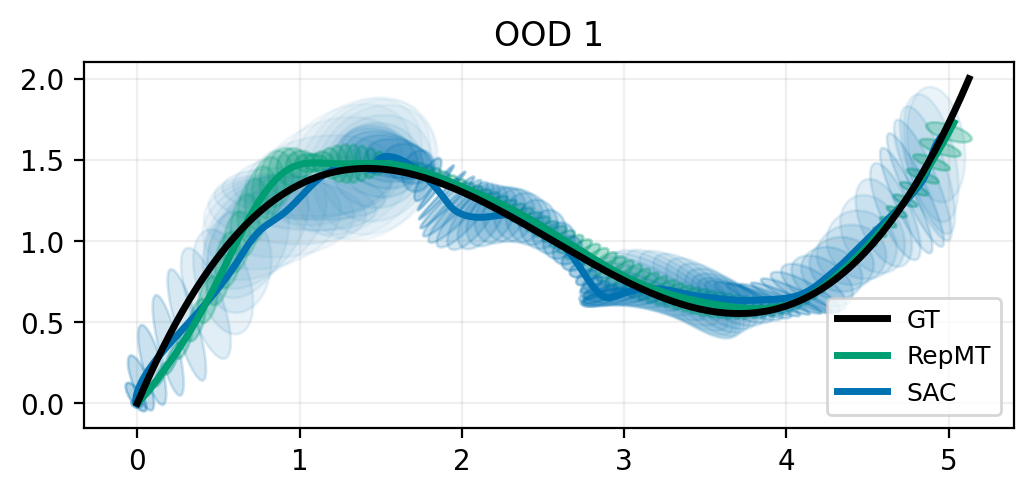}
    \caption{Performance on an OOD trajectory defined by a degree-4 Legendre polynomial.}
\end{subfigure}

\vspace{0.5em}

\begin{subfigure}{\columnwidth}
    \centering
    \includegraphics[width=\linewidth]{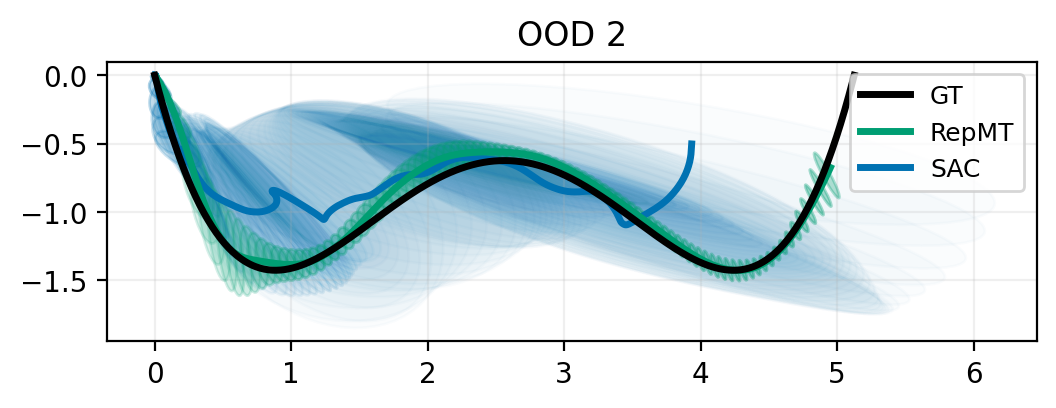}
    \caption{Performance on an OOD trajectory defined by a degree-5 Legendre polynomial.}
\end{subfigure}

\vspace{0.5em}

\begin{subfigure}{\columnwidth}
    \centering
    \includegraphics[width=\linewidth]{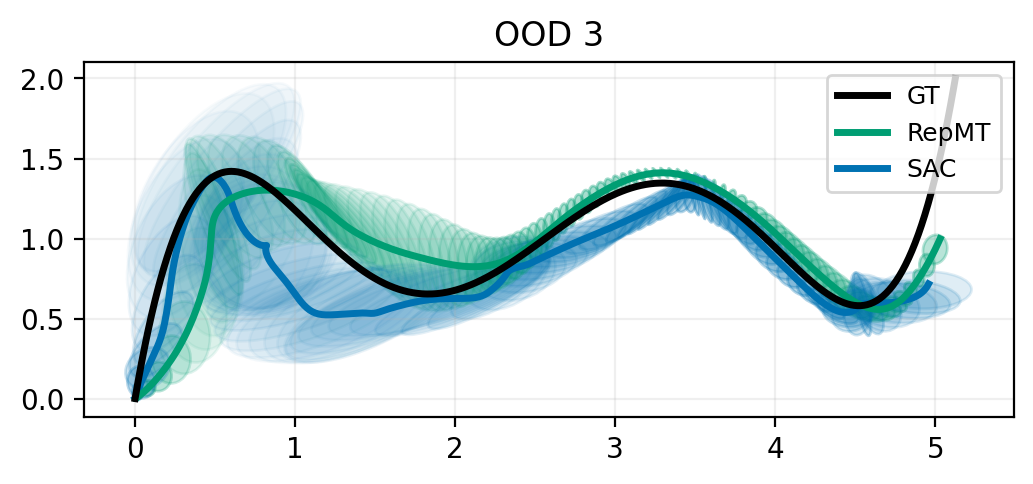}
    \caption{Performance on an OOD trajectory defined by a degree-6 Legendre polynomial.}
\end{subfigure}

\caption{Comparison of quadrotor trajectories under different methods after finetuning on OOD tasks. RepMT closely follows the reference trajectory, while SAC show deviations and instability, even after downstream adaptation.}
\label{fig:OOD_adaptation}
\end{figure}

\subsection{Downstream Adaptation to Out-of-Distribution Tasks}

Finally, we evaluate the adaptability of the learned representations through downstream fine-tuning on OOD tasks $\mathcal{T}^{\text{OOD}}$. Following the pretraining phase, each model is fine-tuned for only $25,000$ iterations—merely $10\%$ of the initial training budget. As shown in Table~\ref{tab:results_last_finetune_full}, \texttt{RepMT} exhibits substantially faster convergence and superior final performance, achieving a perfect $100.0 \pm 0.0\%$ success rate and a total reward of $1034.8 \pm 23.3$. 

In contrast, the baselines struggle to adapt within this limited timeframe. While \texttt{SAC} shows marginal improvement, rising from $26.7\%$ to $40.0\%$ success, it remains unable to reliably follow the higher-degree polynomial trajectories. The \texttt{CTRL} baseline fails to achieve any successful completions ($0.0\%$), despite a slight increase in total reward. This performance gap is qualitatively visualized in Fig. \ref{fig:OOD_adaptation}, where the \texttt{RepMT} trajectories (green) closely track the ground truth (black) across various OOD profiles, while \texttt{SAC} (blue) frequently deviates or fails to reach the target.

The efficiency of this adaptation highlights the utility of our learned task embedding $w(\tau)$. Because the \texttt{RepMT} value function is parameterized to recognize the underlying structure of the trajectory space, fine-tuning becomes a matter of rapid weight adjustment within a well-structured manifold, rather than a tabula rasa search for a new control policy. This demonstrates that our framework provides a robust foundation for transferring zero-shot capabilities to novel, complex maneuvers with minimal additional data.

\section{Conclusion}
\label{sec:conclusion}
In this work, we introduced \texttt{RepMT}, a multi-task reinforcement learning framework for quadrotor trajectory following that leverages a structured decomposition of task-invariant and task-dependent state components. By parameterizing the value function using Legendre polynomial coefficients, our approach provides a stable and informative gradient signal, leading to superior asymptotic performance and generalization. \texttt{RepMT} achieves a $100\%$ success rate on in-distribution tasks and enables rapid adaptation to complex, out-of-distribution trajectories with only $10\%$ of the original pretraining budget.

A limitation of the current work is the hand-crafted nature of $\mu_{\text{train}}$. Future work will focus on leveraging the learned task latent embeddings $w(\tau)$ to guide structured exploration within the task space and identify regions of the manifold that maximize policy robustness. Furthermore, we hypothesize that the stability and structured nature of the \texttt{RepMT} representations will facilitate more reliable Sim2Real transfer building on the work proposed in \cite{ma2024skill}. We plan to validate this by deploying the framework on physical quadrotor platforms and evaluating its resilience to real-world dynamics and environmental disturbances.


\bibliographystyle{IEEEtran}
\bibliography{references}

\end{document}